\newcommand{\DB}{\ensuremath{\mathcal{D}}}
\begin{document}
%
\title{Station-to-User Transfer Learning: \\
Towards Explainable User Clustering Through Latent Trip Signatures Using Tidal-Regularized Non-Negative Matrix Factorization}
%
%

\author{Liming Zhang\inst{1} \and
Andreas Züfle\inst{1} \and
Dieter Pfoser\inst{1}}
\institute{George Mason University, Fairfax VA 22030, USA
\email{\{lzhang22,azufle,dpfoser\}@gmu.edu}}

%
\authorrunning{Zhang et al.}
%
%
\maketitle              
\begin{abstract}
Urban areas provide us with a treasure trove of available data capturing almost every aspect of a population's life. This work focuses on mobility data and how it will help improve our understanding of urban mobility patterns.
Readily available and sizable farecard data captures trips in a public transportation network. However, such data typically lacks temporal modalities and as such the task of inferring trip semantic, station function, and user profile is quite challenging. As existing approaches either focus on station-level or user-level signals, they are prone to overfitting and generate less credible and insightful results. 
To properly learn such characteristics from trip data, we propose a Collective Learning Framework through Latent Representation, which augments user-level learning with collective patterns learned from station-level signals.
This framework uses a novel, so-called Tidal-Regularized Non-negative Matrix Factorization method, which incorporates domain knowledge in the form of temporal passenger flow patterns in generic Non-negative Matrix Factorization.
To evaluate our model performance, a user stability test based on the classical Rand Index is introduced as a metric to benchmark different unsupervised learning models. 
We provide a qualitative analysis of the station functions and user profiles for the Washington D.C. metro and show how our method supports spatiotemporal intra-city mobility exploration.
\keywords{Urban Mobility \and Matrix Factorization \and temporal modalities \and Spatial-temporal analysis.}
\end{abstract}
\section{Introduction}
\label{sec:intro}
Traffic conditions in urban areas across the world remain a global challenge. According to the 2019 INRIX Traffic Scorecard \cite{reed2019inrix}, people in large cities across the world, such as Rio de Janeiro, Paris, and Chicago waste an average of more than 150 hours stuck in traffic, wasting hundreds of billions of USD and creating unnecessary greenhouse gas emissions. With two thirds of the population living in urban areas by 2050 \cite{un18}, our future is characterized by mega cities in which urban mobility becomes a critical concerns.
For these reasons and others, many recent studies have focused on modeling and predicting human mobility in urban scenarios as surveyed in~\cite{wang2019urban}.

Paramount to improving urban mobility is to understand the purpose of trips of people~\cite{wang2017human}. Although the idea to directly collect trip purpose data through travel survey is a practice with a long history~\cite{nitsche2014supporting}, ubiquitous computing and crowdsourcing data \cite{wang2017human,furletti2013inferring,gong2012exploring,gan2018you} provide a complementary way beside conventional time-consuming and costly field surveying. Yet, a data driven approach is challenging, as available trip information does not specify the purpose of a trip.

\begin{figure}[!t]
\vspace{-1\baselineskip}
    \centering
    \includegraphics[width=\linewidth, trim={0cm 8.5cm 0cm 3cm}, clip]{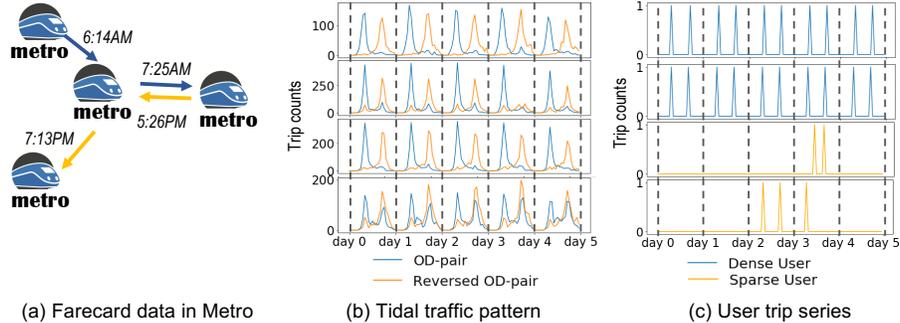}
    
    \caption{(a) A toy example of metro network. Each arrow indicates a trip with the timestamp; (b) Tidal traffic patterns observed within OD-pairs and associated reversed OD-pair (defined in Section \ref{sec:probdef}); (c) dense users have strong recurrent pattern of commuting trips, while sparse users have weaker commuting pattern and are hard to be learned.}
    \label{fig:real_example}
\end{figure}

The goal of this work is to tackle this challenge by providing data-driven methods to learn the purpose of trips.
More specifically, this work focuses on utilizing public transport data like Metro farecard data for urban mobility analysis. Millions of trip records in these Metro farecard data (Figure \ref{fig:real_example}) provide data including: Card ID, Entry station, arrival stations, entry time, and arrival time.
%
%
This data poses a number of unique challenges: (1) The purpose of a metro station is user-dependent, as the home-station of one user may be the work-station of another, and the recreation-station of yet another user; 
(2) a Card IDs do not injectively map to unique users, as one user may hold multiple cards including non-reusable temporary card, or multiple cards purchased at different time; (3) Trips may be missing, as user may choose to other modes of transport for certain trips.
These challenges blur the signal of each individual user.


%
To address these challenges, we propose a ``Station-to-User Transfer Learning'' framework based on a novel domain-specific ``Tidal-Regularized Non-Negative Matrix Factorization (TR-NMF)'' machine learning model. This framework defines similarities of users by mapping them into a latent station feature space. Creation of this feature space exploits knowledge about ``tidal'' behavior of users having recurrent morning and evening peaks~\cite{becker2013human}. We also propose first-of-its-kind clustering stability test as a cross-model evaluation metric to promote future benchmarking in station and user clustering researches.



The remainder of this paper is organized as follows: After surveying related work in Section \ref{sec:related}, we introduce the used datasets and formalize the problem of explainable user clustering in Section~\ref{sec:probdef}. Section \ref{sec:framework} introduces our new Station-to-User transfer learning framework with its novel Tidal-Regularized Non-Negative Matrix Factorization to achieve explainable clustering based on trip semantics, and a clustering stability test metric. Next, in Section \ref{sec:exper}, we provide both quantitative and qualitative evaluations of our approach. Finally, in Section \ref{sec:conclusion}, we emphasize our contributions and future direction of works.

\section{Related work}
\label{sec:related}
Early works on metro farecard data focus on descriptive statistics to characterize tidal pattern and dominant stations~\cite{liu2009understanding,gong2012exploring}. To learn the function of region in stations, 
Solutions have been proposed to infer the function of regions of a city based on individual mobility data in  ~\cite{yuan2012discovering}. This work uses topic modeling to map point of interest and user visits of a region to latent topics. These latent topics that are leveraged to assess similarity between regions. Following this approach, it has been shown in~\cite{zhang2013importance} that the function of a region changes over time, and that it is paramount to consider temporal dynamics. Specifically using Smart Card data, latent factor based solutions have shown capable of recognizing daily patterns, such as weekdays, weekends, and holidays~\cite{yang2017daily}. 

Related to our approach, a recent matrix factorization based approach to infer the temporal functions of regions (or stations) has been proposed in ~\cite{wang2016revealing}. This approach has been leveraged to identify tidal patterns of human mobility in~\cite{troia2017identification}. 
These works have in common that their goal is to infer the function of regions or stations. Our goal is to go a step further and to identify the ``function'' or signature of individual users, to assess the similarity of users to cluster them into groups of similar types of users.
Towards inferring user-specific activities, solutions have been proposed in~\cite{furletti2013inferring} using trajectory data and stop points. However, using only origin, destination, and time information available in metro farecard data, it is not possible to infer stops at specific points of interest to directly infer the purpose of a trip. 
Non-negative Matrix Factorization (NMF) based solutions have also been proposed for other problem related to urban mobility, such as predicting road traffic~\cite{han2016analysis,wang2014travel} and predicting metro traffic demand~\cite{duan2018understanding}. These works provide powerful solutions to predict traffic, but lack explanation of patterns. 
To capture spatial and temporal mobility patterns, existing works \cite{gong2012exploring,poussevin2015mining,tonnelier2016smart} use NMF to explain temporal patterns in daily life, such as commuting pattern that concentrates on morning and evening, and explain the function of urban spatial urban areas. 
In a recent work by Wang et al. \cite{wang2019understanding}, a context-aware tensor decomposition is used to explain urban mobility over space and time using a tensor factorization approach. 
These works have in common that they allow to model similar spatial and temporal urban dynamics, such as days having similar mobility patterns and regions having similar function. However, these approaches do not allow to assess similarity among individual users and passengers. In contrast, our approach unwraps the signatures of individual metro users, allowing to cluster similar users to explain individual users and the purpose of their trips.

\section{Problem Definition}\label{sec:probdef}

In this section, basic definitions and problem setup are presented to formalize our problem.


\begin{definition}[Trip Database]\label{def:DB}
Let $\mathcal{U}$ be a set of metro users, let $\mathcal{S}$ be a set of metro stations, $\mathcal{OD}=\mathcal{S}\times\mathcal{S}$ denote the set of all origin-destination station pairs, and let $\mathcal{T}$ be a set of time intervals or epochs. A trip database $\DB$ is a collection of tuples $(u,(o,d),t)\in \mathcal{U}\times\mathcal{S}\times\mathcal{S}\times\mathcal{T}$, where $u\in\mathcal{U}$ is a user, $(o, d) \in \mathcal{OD}$ is an OD-pair,  $o\in\mathcal{S}$ is the origin station, $d\in\mathcal{S}$ is the destination station,$t\in\mathcal{T}$ is the start time of the trip. 
\end{definition}

With trip database, we can define temporal flow matrix for all OD-pairs which aggregate trip data and stores the number of trips, grouped by OD-pairs, for each time epoch.
\begin{definition}[OD-pair Temporal Flow Matrix]\label{def:V} is denoted as
$\pmb{V} \in \mathbb{R}^{|\mathcal{OD}| \times |\mathcal{T}|}$ matrix, such that:
$$
\pmb{V}_{(o,d)\in\mathcal{OD},t\in\mathcal{T}}= |\{x\in\DB|x.o=o \wedge x.d=d \wedge x.t=t \}|
$$
\end{definition}

We further define a temporal flow matrix for each user, which aggregates the number of trips per user grouped by time epochs oblivious of stations.
\begin{definition}[User Temporal Flow Matrix]\label{def:U} is denoted as $\pmb{U} \in \mathbb{R}^{|\mathcal{U}| \times |\mathcal{T}|}$, matrix such that:
$$
\pmb{U}_{u\in\mathcal{U},t\in\mathcal{T}}=|\{x\in\DB|x.u=u \wedge x.t=t\}|
$$
\end{definition}
Given a OD-pair temporal flow matrix $\pmb{V}$ and a user temporal flow matrix $\pmb{U}$, \textbf{our problems of clustering users are as follows: separate users to different groups that maximize the internal similarity of travel.}

\section{Station-to-User (S2U) Transfer Learning Framework}
\label{sec:framework}
To better cluster users based on the purpose of their trips, we propose a framework to learn the temporal signature between stations and users in a collective manner. 
%
The diagram of this Station-to-User (S2U) Learning Framework is shown in Figure \ref{fig:framework} and has three main steps.

\begin{figure}[!t]
\vspace{-1\baselineskip}
    \centering
    \includegraphics[width=\linewidth, trim=0cm 12.8cm 8.5cm 3cm, clip]{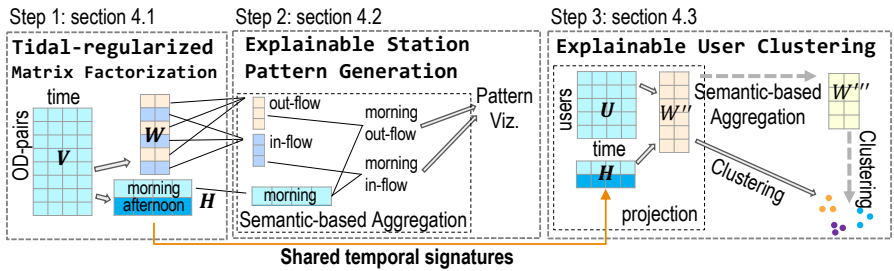}
    \vspace{-1\baselineskip}
    \caption{Station-to-User (S2U) Learning Framework}
    \label{fig:framework}
\end{figure}

\textbf{Step 1: Tidal-regularized matrix factorization:} Factorization of the OD-pair temporal flow matrix $\pmb{V}$ (c.f. Definition~\ref{def:V}) to find temporal latent features $\pmb{H}$ and latent trip features $\pmb{W}$.
To obtain interpretable features, we employ a tidal-regularized loss function to better fit the (empirically grounded) tidal pattern in urban mobility.
More details on this matrix factorization approach are found in Section~\ref{sec:tidal_loss}.

\textbf{Step 2: Decomposing Tidal Features:} Semantic-based Aggregation of latent trip features $\pmb{W}$ to get tidal features of in- and out-flow weights $\pmb{W}'$ for each station station.
The weights indicate, for example, the degree to which a station is a work destination (inflow) or a home destination (outflow).
Details of this approach are described in Section \ref{sec:cluster_station}.

\textbf{Step 3: Projection and clustering of user:} Mapping temporal behavior of users $\pmb{U}$ into the space of tidal features $H$. 
This yields a matrix $\pmb{W}''$ containing the tidal features of each user. The reason for mapping users into the station space is that tidal features of stations are more stable and less noisy as shown by our experimental evaluation. 
This approach allows to provide explainable behavioral difference between users, even for users with only a few observed trips, thus making user clustering results more stable and informative. 
More details of this step are found in Section \ref{sec:cluster_user}.

\subsection{Tidal-regularized Non-negative Matrix Factorization (TF-NMF)}
\label{sec:tidal_loss}

We decompose matrix $\pmb{V}\in \mathbb{R}^{|\mathcal{OD}| \times |\mathcal{T}|}$ into two non-negative matrices $\pmb{W}\in \mathbb{R}^{|\mathcal{OD}| \times K}$ and $\pmb{H}\in \mathbb{R}^{K \times |\mathcal{T}|}$, such that
$$
\pmb{V}\approx \pmb{W}\pmb{H}=:\hat{\pmb{V}},
$$
where $K$ is a positive integer, $\mathcal{OD}$ is the set of origin-destination pairs, and $\mathcal{T}$ is the set of temporal epochs (c.f. Definition~\ref{def:DB}).
To find $\pmb{W}$ and $\pmb{H}$ we minimize a loss function $\mathcal{L}$ defined by the mean square approximation error and the $L_1$ and $L_2$ norms of $\pmb{W}$ and $\pmb{H}$ as follows~\cite{hoyer2004non}:
$$
\mathcal{L} = \sum_i \sum_t (\pmb{V}_{i,t} - \hat{\pmb{V}}_{i,t})^2 
+ \alpha \eta  ( \|W\|_1 + \|H\|_1 )
+ \alpha (1 - \eta) ( \|W\|_2 + \|H\|_2)
$$, where $\|\cdot\|_1$ is the $L_1$ norm of a matrix, $\|\cdot\|_2$ is $L_2$ (or Frobenius norm) of a matrix, and $\alpha, \eta$ are hyper-parameters. 

Motivated by a \textit{tidal traffic pattern} observed in urban areas (cf. \cite{taylor2000network,alvizu2016energy,troia2017identification}), we observe that the tidal-traffic pattern has strong temporal peaks, as the morning commute happens before $11am$, while the reverse afternoon commute happens after $2pm$. 
%

We incorporate this \textit{a-priori knowledge} into our NMF approach by adding \textbf{a tidal-regularized (TR) loss to the generic NMF loss function.} It acts as a soft regularization to guide learned temporal signatures towards a better fit to such a tidal pattern. 
To understand the tidal regularized loss, we partition factor matrices $\pmb{W}$ and $\pmb{H}$ to separate tidal features corresponding to daily morning and evening peaks. This approach is illustrated in Figure \ref{fig:tidal_loss} and described as follows.

\textbf{(i) Grouping latent features by temporal semantics.} Generic NMF does not consider (or understand) temporal ordering, as temporal epochs (columns in $\pmb{U}$ and $\pmb{V}$) are treated as nominal (but not ordinal) variables. We sort latent features by their temporal semantics to understand and guide the learning process. 
We exploit that matrix $\pmb{H}$ provides the temporal semantics of each latent feature: It describes each temporal epoch (such as each hour), by $K$ latent features. Assuming tidal patterns, we expect some latent features to have larger weights in the morning epochs, and some latent features to have larger weights in the evening epochs. As an example in the upper right of Figure~\ref{fig:tidal_loss}, the scatter plot shows the temporal semantics of $6$ latent features Washington D.C. metro data. We observe that for latent feature $1$ and $2$, we clearly observe morning hour semantics. For features $5$ and $6$, the semantic feature is on the afternoon hours. Feature $3$ are not clearly delineated between morning and afternoon. Feature $6$ describe late afternoon and evening semantics.

We swap lines in matrix $\pmb{H}$ such that the first $k\leq K$ feature correspond to the morning features, and the last $k^\prime\leq K-k$ features correspond to the evening features.
To ensure that this swapping of columns in $\pmb{H}$ does not affect the factor product $\pmb{V}$, we perform the same swaps among lines of $\pmb{W}$ using the following observation.

\begin{lemma}\label{lem:swap}
Let $W\in R^{m\times K}$, $H\in R^{K\times n}$.
Further, let $x,y\leq K$, let $W^\prime$ be obtained by swapping lines $x$ and $y$ in $W$, and let $H^\prime$ be obtained by swapping columns $x$ and $y$, then
$$
WH = W^\prime H^\prime
$$
\end{lemma}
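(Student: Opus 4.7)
The plan is to reduce the statement to the observation that the matrix product can be expanded as a sum of $K$ rank-one outer products indexed by the latent-feature axis, so any simultaneous permutation of the columns of $W$ and the rows of $H$ merely relabels the summands and leaves the sum unchanged. Before starting I would fix the intended reading of the lemma: in the surrounding text the swap is applied to the latent-feature axis, so ``lines $x$ and $y$ in $W$'' must refer to \emph{columns} of $W$ (the feature axis of the $m\times K$ factor), paired with rows $x$ and $y$ of the $K\times n$ factor $H$. The constraint $x,y\le K$ is consistent only with this reading.

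Concretely, I would start from the entrywise identity
\[
(WH)_{ij}=\sum_{k=1}^{K} W_{ik}H_{kj},
\]
and let $\sigma$ be the transposition on $\{1,\dots,K\}$ that exchanges $x$ and $y$. With the definitions $W'_{ik}:=W_{i,\sigma(k)}$ and $H'_{kj}:=H_{\sigma(k),j}$, I would substitute and apply the change of index $\ell=\sigma(k)$, which is a bijection on $\{1,\dots,K\}$, to obtain
\[
(W'H')_{ij}=\sum_{k=1}^{K} W_{i,\sigma(k)}H_{\sigma(k),j}=\sum_{\ell=1}^{K} W_{i\ell}H_{\ell j}=(WH)_{ij}.
\]
Unpacking the transposition, the only two summands that change are $W_{ix}H_{xj}$ and $W_{iy}H_{yj}$, whose roles are exchanged in $W'H'$ but whose total contribution is the same by commutativity of addition.

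I do not expect any genuine obstacle in the algebra; the whole argument is a one-line re-indexing once the setup is correct. The main care point is the aforementioned clarification of which axis is permuted in each factor, since the raw wording is ambiguous. I would also note in passing that the same proof goes through verbatim for an arbitrary permutation $\sigma$ of the $K$ latent features and not only a single transposition, which justifies the iterative reordering by temporal semantics used in Step~(i) of the tidal-regularization procedure.
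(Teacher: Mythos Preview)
Your proof is correct and follows essentially the same entrywise-expansion approach as the paper: both arguments write $(WH)_{ij}=\sum_k W_{ik}H_{kj}$, isolate the summands at $k=x$ and $k=y$, and observe that exchanging them leaves the sum unchanged. Your version is in fact cleaner---you correctly disambiguate that the swap must act on columns of $W$ and rows of $H$ (the paper's own indices are garbled), and your remark that the argument extends to arbitrary permutations $\sigma$ is a useful addition that the paper implicitly relies on but does not state.
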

\begin{proof}
Let $V=WH$, and let $V^\prime=W^\prime H^\prime$. For any cell $v_{ij}$ in $V$ is derived by matrix multiplication as
$$
v_{ij}=\sum_{k=1}^K w_{ik}h_{ki} = \sum_{k=1,k\neq x,y}^K w_{ik}h_{ki} + w_{xk}h_{kx} + w_{yk}h_{ky}
$$
Equivalently, we obtain 
$$
v^\prime_{ij}=\sum_{k=1}^K w_{ik}h_{ki} = \sum_{k=1,k\neq x,y}^K w_{ik}h_{ki} + w_{yk}h_{ky} + w_{xk}h_{kx}.
$$
Since $w_{xk}h_{kx} + w_{yk}h_{ky}=w_{yk}h_{ky} + w_{xk}h_{kx}$ by commutativity of multiplication, we get $v_{ij}=v^\prime_{ij}$ for any $i,j\leq k$. Thus $V=V^\prime$. 
\end{proof}
Lemma~\ref{lem:swap} allows us to assume, without loss of generality, that columns of 
$\pmb{W}$ and lines of $\pmb{H}$ are grouped into morning features first and afternoon features last. 

\textbf{(ii) Grouping Symmetric OD-Pairs:} Symmetric origin-destination pairs $a,b \mathcal{OD}, \forall a = (u,v), b = (v,u)$ have opposite direction of tidal traffic flow. To make learned latent tidal features fitting to tidal traffic flow, we aims to minimize the difference between morning flow of $a$ and evening flow of $b$ which is in the symmetric origin-destination of $a$.
We drop all the OD-pairs with $u = v$, since it is meaningless for a user entering and exiting the same station.




\textbf{(iii) Temporal partitions}. Based on (i) and (ii), we can partition $\pmb{W}$ and $\pmb{H}$; five partitions for $\pmb{W}$ according to OD direction and temporal ordering of latent components as shown in Figure~\ref{fig:tidal_loss}, in which the white cells are weights of non-commuting signatures. 
%
Accordingly, there are five partitions for $\pmb{H}$ based on temporal ordering of latent components and time of the day, shown in Figure \ref{fig:tidal_loss}, in which the white cells are non-commuting signatures.
Figure \ref{fig:tidal_loss} also shows how the tidal-regularized loss is calculated by minimizing the differences between Reverse OD-Pairs' total morning and afternoon commute flows. For example, the reconstructed morning trip matrix $(u, v)$ is $\pmb{W}^{1} \pmb{H}^{1}$. Instead of calculating the difference for each $t$, we need the total accumulated morning trip flow for OD-pair $(u, v)$ as 
$\pmb{r}_{(u,v)}^{morning} = \sum_{t=1}^{t'}\Big(\pmb{W}_{(u, v),}^{1} \pmb{H}_{,t}^{1}\Big)$. 
Similarly, its Reverse OD-Pair $(u', v')$ the total afternoon flow as $\pmb{r}_{(u',v')}^{afternoon} = \sum_{t=t'+1}^{T}\Big(\pmb{W}_{(u', v'),}^{4} \pmb{H}_{,t}^{4}\Big)$. 
The next step is to add all Reverse OD-Pairs' differences, $\big(\pmb{r}_{(u,v)}^{morning} - \pmb{r}_{(u',v')}^{afternoon}\big)^2$. 
Given that the combinations $\pmb{H}^{3}$ and $\pmb{H}^{2}$ have as little flow as possible, we regularize them to (close to) zero. 
\begin{figure}
\vspace{-1\baselineskip}
    \centering
    \includegraphics[width=\linewidth, trim=0cm 7.5cm 4cm 3cm, clip]{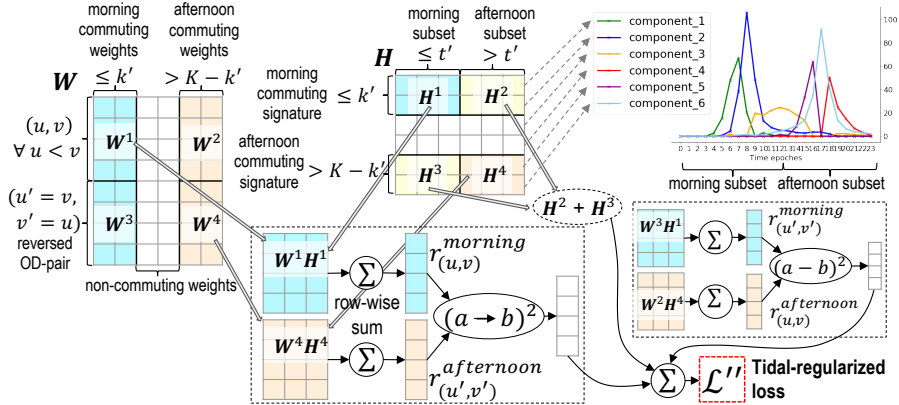}
    \vspace{-1\baselineskip}
    \caption{Partitions of weight matrix $\pmb{W}$ and latent component matrix $\pmb{H}$ by Reverse OD-Pairs and temporal ordering, and compositions of tidal-regularized loss}
    \label{fig:tidal_loss}
\end{figure}

In summary, the Tidal-regularized loss is formulated as follows.
\begin{equation}
\begin{split}
    \mathcal{L}'' = &\gamma \sum_{(u, v) = (1, 2)}^{u \leq |V|-1, \forall u < v}\big((\pmb{r}_{(u,v)}^{morning} - \pmb{r}_{(u',v')}^{afternoon})^2 + (\pmb{r}_{(u',v')}^{morning} - \pmb{r}_{(u,v)}^{afternoon})^2\big)\\
    &+ \rho \big(\|\pmb{H}^{3}\|_{F}^2 + \|\pmb{H}^{2}\|_{F}^2\big) \\
    & \pmb{r}_{(u,v)}^{morning} = \sum_{t=1}^{ t'}\Big(\pmb{W}_{(u, v),}^{1} \pmb{H}_{,t}^{1}\Big),\,\,
    \pmb{r}_{(u',v')}^{afternoon} = \sum_{t=t'+1}^{T}\Big(\pmb{W}_{(u', v'),}^{4} \pmb{H}_{,t}^{4}\Big) \\
    & \pmb{r}_{(u',v')}^{morning} = \sum_{t=1}^{t'}\Big(\pmb{W}_{(u', v'),}^{3} \pmb{H}_{,t}^{1}\Big),\,\,
    \pmb{r}_{(u,v)}^{afternoon} = \sum_{t=t'+1}^{T}\Big(\pmb{W}_{(u, v),}^{2} \pmb{H}_{,t}^{4}\Big)
\end{split}
\label{eq:tidal_loss}
\end{equation}

The total regularization loss $\mathcal{L}$ is defined as $\mathcal{L} = \mathcal{L}' + \mathcal{L}''$. We use Tensorflow to develop our new algorithm and directly utilize the \textit{Autodiff} features, which automatically compute the gradient update rules to optimize $W$ and $H$. 
To terminate training, we either use the Mean-Square-Error $\mathcal{L}'$ (setting a minimum threshold) or use a fixed number of training steps.
Post training, we convert the latent representation to a unit vector (cf. \cite{xu2003document}) as  follows:
$h_{k, t} = \frac{h_{k, t}}{\sqrt{\sum_t^T h_{k, t}^2}}, \,\,w_{i, k} = w_{i, k} \sqrt{\sum_t^T h_{k, t}^2}$

\subsection{Decompose modalities as station functions and clustering stations with explainable temporal modality}
\label{sec:cluster_station}
\textbf{Relating signatures to station functions:} 
To cluster stations, we need to convert the OD-pair flow matrix to in-flows and out-flow for each station.
To determine the function of a station (home, work, tourism, etc.) we examine the in- and out-flow of stations (cf. \cite{gong2012exploring,briand2015mixture}). 
In-flow symbolizes attracting people for, e.g., work, and we refer to this as ``attractivity'' function of a place. Out-flow indicates people leaving, e.g., from home or hotels, and we refer to this as ``generativity'' function of a place. 
However, different from existing works, we want to  distinguish not only between commuting and other functions, but also between different types of commuting (flexible work hours, etc.) for a station. 
We decompose station functions based on the meaning of temporal signatures $\pmb{H}$ identified by TR-NMF.

Finding explainable station functions is achieved by ``\textit{semantics-based aggregation}'' (cf. Figure \ref{fig:gene_attr}) that cherry-picks specific temporal signatures according to our partitions and their peak hours. We can select only one type of temporal signature and use its corresponding weights to recover the in- and out-flow of each station. 
Temporal signatures are used to explain trip semantics, such as morning  ($\pmb{H}^1$) and afternoon commutes ($\pmb{H}^4$) (cf. Figure~\ref{fig:tidal_loss}).
Furthermore, within both $\pmb{H}^1$ and $\pmb{H}^4$, there could be multiple rows of temporal signatures for different hours, e.g., $7am$, $8am$. 
Figure~\ref{fig:gene_attr} gives an example for temporal signatures peaking at $7am$ and aggregates the OD-pair weights to recover the in- and out-flow for stations. The recovered flows inherit the strong semantic meaning for different stations. Stations with large early hour in-flows are strong attractivity places, where stations with large out-flows are strong generativity places. Section \ref{sec:exp_qualitative} will give examples for  Washington D.C.
\begin{figure}[!t]
\vspace{-1\baselineskip}
  \centering
  \includegraphics[width=\linewidth, trim=0cm 7cm 5cm 3.5cm, clip]{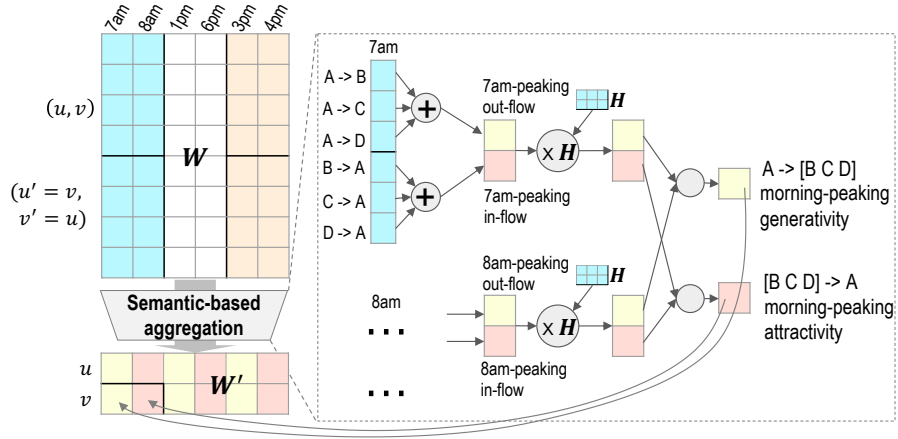}
  \vspace{-1\baselineskip}
  
  \caption{Semantics-based aggregation operation to get explainable generativity and attractivity station functions}
  \label{fig:gene_attr}
\end{figure}




\subsection{Learning user clusters}
\label{sec:cluster_user}

To find the projection of users is the same dimension reduction task as NMF. We implement a part of a multiplicative update rule to project a user supported by the learned temporal modalities:
\begin{equation}
    w''^{(i+1)}_{u,k} \leftarrow w''^{(i)}_{u,k} \frac{(\pmb{U}_{u,} \pmb{H}^T)_{u,k}}{(\pmb{W}^{(i)} \pmb{H} \pmb{H}^T)_{u,k}}
    \label{eq:W_update}
\end{equation}
where $w''_{u,k}$ are weights of shared temporal signature of a matrix $\pmb{W}''$ for user ID $u$ and temporal modalities $k$ (column). $i$ is the number of iterations needed until a convergence criterion (like Mean Square Error) is met. More details on multiplicative update rules can be found in, e.g., \cite{lee2001algorithms,xu2003document}.

After finding the weight matrix $\pmb{W}''$ for users, a clustering algorithm like k-Means++ \cite{arthur2006k} can be used to cluster them. Since each weight is associated with shared temporal signature, we can again use a semantics-based aggregation to obtain $\pmb{W}'''$ for a different explainable clustering.

\subsection{User clustering stability test}
\label{sec:stability_test}
To assess the performance of clustering methods, we introduce a novel domain-specific quantitative evaluation metric called ``Clustering Stability Test''. Although many different models are proposed in existing work using farecard data, no domain-specific quantitative evaluation metric is available to compare model performance. 
Different model assumptions and procedures prevent cross-model comparisons for clustering. 

Different metrics such as 
potential (sum of squared distances of samples to their closest cluster center) \cite{hastie2009elements}, 
log-likelihood score \cite{hastie2009elements}, perplexity score (information measure of generative probabilistic models) \cite{lucas2012mutual}, 
AIC \cite{hastie2009elements}, and 
BIC \cite{hastie2009elements}, 
are used to assess clustering quality based on model assumption or information theory but they are not able to judge the stability of a clustering. 
Various works exist to test the stability of a clustering, e.g.,  \cite{lange2004stability,rakhlin2007stability}. Our proposed metric is based on Adjusted Rand Index (ARI) \cite{hubert1985comparing}, a well-known measurement of the similarity between two clusterings with the same number of clusters $K$. 
For a set of data, like users $\mathcal{U}$, one clustering result assigns a set of group labels to each user with $X = \{ x_1, x_2, \dots, x_u\}$, while another clustering result assigsn a set of labels $Y = \{ y_1, y_2, \dots, y_u\}$. It can compute a ARI score $ARI_{x, y}$ based on these two label sets with random permutation of cluster label orders (cf. \cite{hubert1985comparing}). $ARI_{x, y}$ is a value with range $[-1, +1]$, where $0$ indicates complete random labeling, $+1$ stands for a perfect match, and  $-1$ indicates complete reversed labeling. 
However, generic $ARI$ only tells if two clustering sets are similar. It cannot tell which method is better in terms of stability, which is something our new method addresses.
\begin{figure}[htbp]
\vspace{-1\baselineskip}
    \centering
    \includegraphics[width=\linewidth, trim={0cm 13cm 7.5cm 3.5cm}, clip]{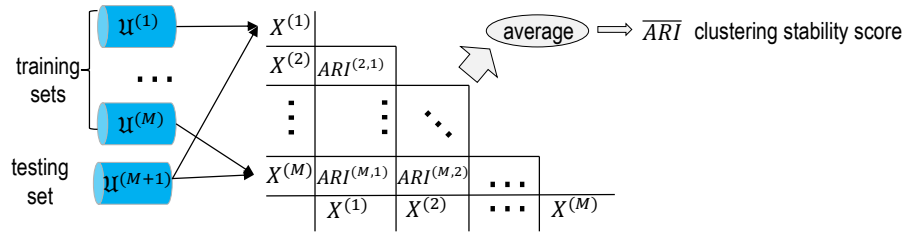}
    \vspace{-1\baselineskip}
    \caption{Clustering stability test}
    \label{fig:stability_test}
\end{figure}

Clustering stability in the context of semantic-poor farecard data aims to get a stable clustering, which is resilient to long-tail users who have abnormal or sparse behaviors, and without considering the internal processes. For example, we consider StoU framework as a whole method that output user labels, not just the internal k-means++ method. A stable clustering should capture those most significant behaviors without overfitting to those long-tail users. This means that if a large portion of users are unobserved, the user clustering labels do not change. So, our proposed procedure in Figure \ref{fig:stability_test} is similar to a typical classification problem that different training data are used to predict testing data. This approach is also inspired by some input randomization works \cite{lange2004stability,rakhlin2007stability}. We partition original user ID set $\mathcal{U}$ to non-overlapping $M$ training sets $\{\mathcal{U}^{(m)}\}_{m=1}^M$, and another non-overlapping testing set $\mathcal{U}^{(M+1)}$. Each method are applied to a mixed set that add a training set with the testing set, $\mathcal{U}'_m = \mathcal{U}^{(m)} + \mathcal{U}^{(M+1)}$. A label set $X^{(m)}$ can be got for $\mathcal{U}'_m$. For each pair of $X^{(m_i)}$ and $X^{(m_j)}$, we can get a ARI score $ARI^{(m_i, m_j)}$. Then, the average of all the paired ARI scores is used as the ``clustering stability score $\Bar{ARI}$'' to compare across different clustering methods. $\Bar{ARI}$ is going to be in range $[-1, +1]$, where $+1$ indicates a perfectly stable method.

\section{Experiments and Results}
\label{sec:exper}

In this section, we compare Collective Learning Framework using three real-world datasets and two competing methods. Section \ref{sec:exp_setting} introduces our experimenting settings. Evaluation of competing methods and our proposed method are compared using the new clustering stability metric in Section \ref{sec:exp_stability}. Then qualitative evaluations with spatial-temporal visualization are shown in Section \ref{sec:exp_qualitative} including explainable station clustering and user clustering results that decode urban mobility pattern in Washington D.C. as a case study.

\subsection{Experimental settings}
\label{sec:exp_setting}
\textbf{Real-world datasets:} we utilize three real-world farecard mobility datasets from Washington D.C.. Metro farecard data is from Washington Metropolitan Area Transit Authority (WMATA), which covers metropolitan area of Washington D.C. in one week from May-01-2016 to May-07-2016
Each fare record only contains limited information, which are an anonymous card ID, an entry station ID, an exit station ID, an entry timestamp, and an exit timestamp. Some preprocessing is done to convert timestamps to 24 hour time snapshot of a day. There are total of about 3.57 million trip records, and about 0.8 million unique user IDs. Taxi data is collected from different taxi agencies required by open-data initiative of Washington D.C. metropolitan []. It includes data from Washington D.C.'s bike-sharing serving with $401$ stations.
To preprocess these data, we convert raw timestamps to $24$ hours for $\mathcal{T}$. For metro data, we only keep sparsy users who have less or equal to 3 trips per week in this experiments. Since taxi data do not have stations, we use grid cells of $0.02$ degree (about $2Km$) by $0.02$ degree to build OD-pair temporal flow matrix. For metro, we do random selection without replacements to get $10$ training sets with $50,000$ users per set and $10,000$ users in testing set. For taxi, similarly, we get $8$ training sets with $10,000$ per set and $2,000$ users in testing set. For bike, we get $10$ training sets with $300$ users per set and $30$ users in testing set. Table \ref{tab:data_summary} is some descriptive summaries of our experiment datasets.
\begin{table}[!b]
    \centering
    \scriptsize
    \begin{tabular}{c|c|c|c|c|c}
        \toprule
         Data & Total users & Total trip & Training sets & Users per training & Users per testing \\
         \hline
         Metro &  516,976 & 845,700 & 10 & $50,000$ & $10,000$ \\
         \hline
         Taxi &  89,237 & 89,237 & 8 & $10,000$ & $2,000$ \\
         \hline
         Bike &  3,032 & 51,325 & 8 & $10,000$ & $2,000$ \\
        \bottomrule
    \end{tabular}
    \caption{Descriptive summaries of experiment datasets}
    \label{tab:data_summary}
\end{table}

\textbf{Metric:} we use the clustering stability test procedure to get the average or median $ARI$ scores of non-overlapping training sets. The higher $ARI$ score is, the better a method is. Because we introduce random splitting to get training sets, median (MED) and median absolute deviation (MAD) of a few dozens of runs are used to eliminate impacts from outlying cases.

\textbf{Competing methods:} 
Two competing methods are used with one controlled experiment method:
1) a naive model using raw trip flow of each time epoch as clustering features in KMeans++, noted as ``Naive'';
2) a baseline model using NMF on temporal trip counts feature proposed in \cite{carel2017non}, and apply KMeans++ clustering on reduced weights matrix, noted as ``NMF''; 
3) a controlled experiment for S2U framework which identically replicates a training set for clustering stability test, noted as ``Control''. Its goal is to show the effectiveness of S2U. 


\subsection{Quantitative comparisons with clustering stability test}
\label{sec:exp_stability}
What follows is a discussion of the performance of S2U framework compared to competing methods. Given the low complexity of Matrix Factorization and KMeans++, all the running times are around a few seconds.
\begin{table}[htbp]
    \centering
    \scriptsize
    \vspace{-1\baselineskip}
    \begin{tabular}{c|c|c|c|c|c}
    \toprule
        data  & ARI scores & Naive & NMF & S2U & Control \\
        \hline
        \multirow{2}{*}{Metro}  & [MED,MAD] of Mean & 0.5217, 0.0474 & 0.6501, 0.0342 & \textbf{0.7019, 0.0477} & 0.8034, 0.0590 \\
        \cline{2-6}
        & [MED,MAD] of Median & 0.5504, 0.0523 & 0.5815, 0.0333 & \textbf{0.6496, 0.0821} & 0.7347, 0.1400 \\
        \hline
        \multirow{2}{*}{Taxi} & [MED,MAD] of Mean & 0.5417, 0.0466 & 0.6605, 0.0804 & \textbf{0.8117, 0.0388} & 1.0, 0 \\
        \cline{2-6}
        & [MED,MAD] of Median & 0.4781, 0.0222 & 0.6079, 0.0239 & \textbf{0.8150, 0.0421} & 1.0, 0 \\
        \hline
        \multirow{2}{*}{Bike} & [MED,MAD] of Mean & 0.5727, 0.1308 & 0.5412, 0.1147 & \textbf{0.6347, 0.0836} & 0.7846, 0.0921 \\
        \cline{2-6}
        & [MED,MAD] of Median & 0.5697, 0.1293 & 0.5525, 0.1169 & \textbf{0.6272, 0.0844} & 0.7816, 0.1284 \\
        \bottomrule
    \end{tabular}
    \caption{Comparisons using user clustering stability test}
    \label{tab:user_stability}
\end{table}

Table \ref{tab:user_stability} contains performances of different methods. In each table cell, first value is median (MED) of a hundred of experiment runs, while second value is median absolute deviation (MAD) value in a hundred of runs. Fig. \ref{fig:clusters} shows distributions of clustering labels for each data and each model. 
The main finding is that our S2U outforms other two competing methods for all three data in both MED of Mean ARI and MED of Median ARI. Even if we subtract the MAD scores from MEDs (which is $95\%$ lower bound), S2U have a discounted lower bound of condifence close to two competing methods. For example of Metro, S2U have a subtracted value of $0.7019 - 0.0477 = 0.6542$, and NMF have a MED of $0.6501$. For taxi data, the gain is even larger with S2U's lower bound of $0.8117 - 0.0388 = 0.7729$ and NMF`s MED of Mean ARI of $0.6605$.
And, we are also confident in this result by examining Control model, which is consistantly much higher than normal S2U results.
\begin{figure}[!b]
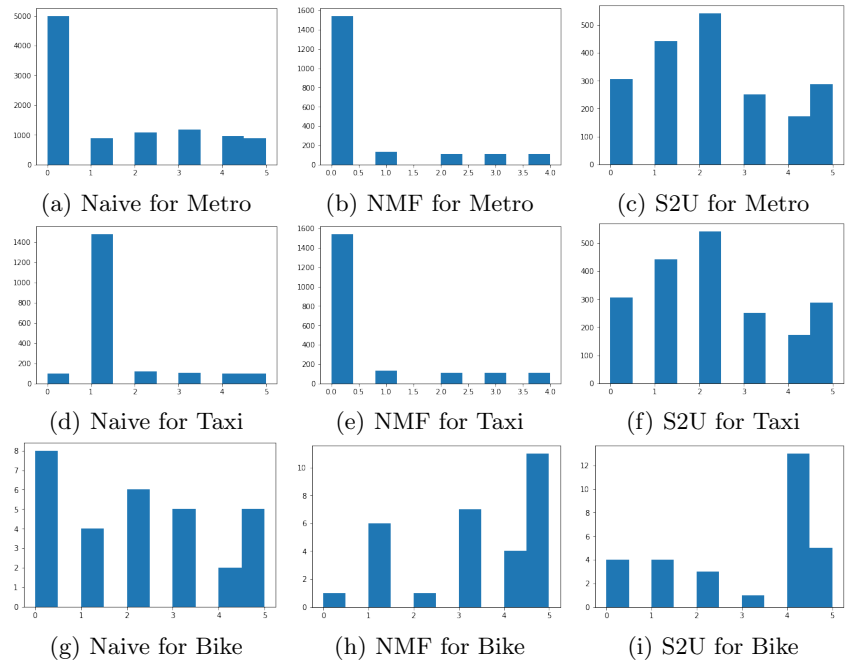

\vspace{-1\baselineskip}
\subfigure[Naive for Metro]{\includegraphics[width = 0.3 \linewidth]{images/metro_naive.png}\label{fig:metro_naive}}
\subfigure[NMF for Metro]{\includegraphics[width = 0.3 \linewidth]{images/metro_nmf.png}\label{fig:metro_nmf}}
\subfigure[S2U for Metro]{\includegraphics[width = 0.3 \linewidth]{images/metro_s2u.png}\label{fig:metro_s2u}}
\vspace{-1\baselineskip}

\subfigure[Naive for Taxi]{\includegraphics[width = 0.3 \linewidth]{images/taxi_naive.png}\label{fig:taxi_naive}}
\subfigure[NMF for Taxi]{\includegraphics[width = 0.3 \linewidth]{images/taxi_nmf.png}\label{fig:taxi_nmf}}
\subfigure[S2U for Taxi]{\includegraphics[width = 0.3 \linewidth]{images/taxi_s2u.png}\label{fig:taxi_s2u}}
\vspace{-1\baselineskip}

\subfigure[Naive for Bike]{\includegraphics[width = 0.3 \linewidth]{images/bike_naive.png}\label{fig:bike_naive}}
\subfigure[NMF for Bike]{\includegraphics[width = 0.3 \linewidth]{images/bike_nmf.png}\label{fig:bike_nmf}}
\subfigure[S2U for Bike]{\includegraphics[width = 0.3 \linewidth]{images/bike_s2u.png}\label{fig:bike_s2u}}
\vspace{-1\baselineskip}

\caption{Histograms of User Clustering Labels (6 cluster number for all methods)}
\label{fig:clusters}
\end{figure}
Additional proof of a better clustering is the less skewed distribution of clustering labels in Figures \ref{fig:metro_s2u} and \ref{fig:taxi_s2u} compared to others of Naive and NMF models, which means that each cluster capture more meaningful patterns of sparsy users.. 
By checking clustering labels of Metro and Taxi with Naive and NMF in Fig. \ref{fig:metro_naive}, \ref{fig:metro_nmf}, \ref{fig:taxi_naive}, and \ref{fig:taxi_nmf}, it can be observed that there is a quite dominating cluster for both data. This is an indicator that these two methods do not capture the real patterns because raw user temporal flow matrix $\pmb{U}$ or decomposed $\pmb{U}$ by NMF model are not informative for these sparsy users. That is why we could not conclude that NMF is just as good as raw $\pmb{U}$ features for Metro, even though it gains $0.13$ of MED of mean ARI. For bike data, clustering labels are more evenly distributed in different groups. It is not quite changed for S2U and competing methods.

Looking at more details of the results, by comparing Naive model with NMF model, NMF already performs better for Metro and Taxi data with $0.13$ higher for MED of mean ARI scores and $0.03$ higher for MED of median ARI scores. If we consider the variance in random splitting, the improvement of median ARI is not significant since MAD of NMF's median ARI is $0.03$. But, the variance of NMF's mean ARI is only $0.03$, so this is a significant improvement if we use NMF model for user clustering compared to raw features.
But, NMF is not as good as Naive for bike data with about $0.01$ to $0.02$ lower on both MED of mean ARI scores and MED of median ARI scores. Of course, if we consider MADs of $0.1308$ and $0.1147$, the difference of $0.02$ is smaller than variance. It is not a significant gain, and it is hard to say NMF is better than Naive approach. 
For Taxi data, there is a huge improvement for cluster label distribution, while MED of mean ARI improve a lot by $0.15$ and MED of median ARI improve by a huge value of $0.21$. The possible reason is that Taxi data are already quite cluster-able and dominated by commuting patterns that our method fit into this commuting pattern strongly.

\subsection{Qualitative evaluations}
\label{sec:exp_qualitative}
A few demonstrations of how our framework compare to others are shown in this part. In follow Figure \ref{fig:latent_components}, temporal signatures found by TR-NMF model is shown. The $x$ axis is $24$ hours of time epoches of a day, and $y$ axis is the total signal in a time epoch. The left one \ref{fig:nmf_components} shows temporal signatures found by generic NMF algorithm, and the right one \ref{fig:trnmf_components} shows temporal signatures  found by TR-NMF. In both figures, components 1 \& 2 are morning commute signatures, components 5 \& 6 are afternoon commute signatures, and components 3 \& 4 are non-commuting signatures. We can see the overall trends are more or less the same, while the improvement of TR-NMF is between the component 4 (red one of non-commuting signature) and the component 6 (skyblue one of afternoon commute signature). Component 4 lost a few signal around noon, while those signal are used to improve component 6, because those temporal features are temporally closer to component 6's peaking feature. This result of TR-NMF definitely show how tidal-regularized loss constrain the learning, and provides better explainable power to temporal pattern in metro data.
\begin{figure}[hbtp]
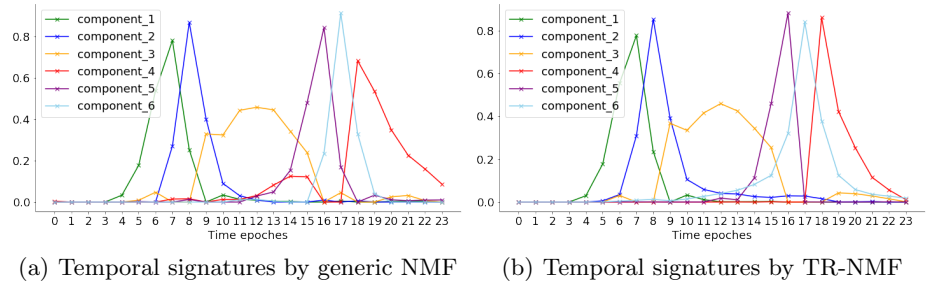

\vspace{-1\baselineskip}
\subfigure[Temporal signatures by generic NMF]{\includegraphics[width = 0.5 \linewidth]{images/nmf_components.png}\label{fig:nmf_components}}
\subfigure[Temporal signatures by TR-NMF]{\includegraphics[width = 0.5 \linewidth]{images/trnmf_components_normalized.png}\label{fig:trnmf_components}}
\vspace{-1\baselineskip}

\caption{Temporal signatures by generic Non-negative Matrix Factorization (NMF) and Tidal-Regularized Non-Negative Matrix Factorization (TR-NMF))}
\label{fig:latent_components}
\end{figure}



\textbf{Explainable user clustering results}
t-Distributed Stochastic Neighbor Embedding (t-SNE) is a information-based machine learning visualization technique \cite{maaten2008visualizing}. It can reduce dimension through non-linear manifold while preserving informative similarity pattern within data. With its popularity to visualize high dimension data, we use it to qualitatively demonstrate the intrinsic properties of raw data itself (first raw of Figure \ref{fig:tsne}), and properties in transformed features by TR-NMF (second row of Figure \ref{fig:tsne}). The $x$ and $y$ axis are reduced two features. Each point is a user. Different colors of points are clusering labels found previously (first row is Raw model and second row is S2U model). First row is the output of t-SNE using raw features, and second row is the output of t-SNE using TR-NMF transformed features. By comparing raw features and transformed features, we can see that TR-NMF features are more informative with more clear clustering patterns. For Metro (first column), raw users are pretty flattened, while transformed users are more concentrated. It is a similar case for Bike (third column). However, Metro and Bike are both challenging problem themselves, since there are not strong clustering patterns in t-SNE transformed space. For Taxi, the raw features already contain strong similarity within several observable clusters, while TR-NMF did a good job to make those clusters condensed. Notice that there are more visually-appealing clusters in t-SNE space than S2U's clusters. But, for Naive model's clustering, blue point clustering includes most of the visually-appealing clusters. This is a minor problem for Taxi data since we can increase cluster number of S2U until the number reach a optimal value, however, increasing cluster number for Metro and Bike would not significantly improve clustering results.
\begin{figure}[htbp]
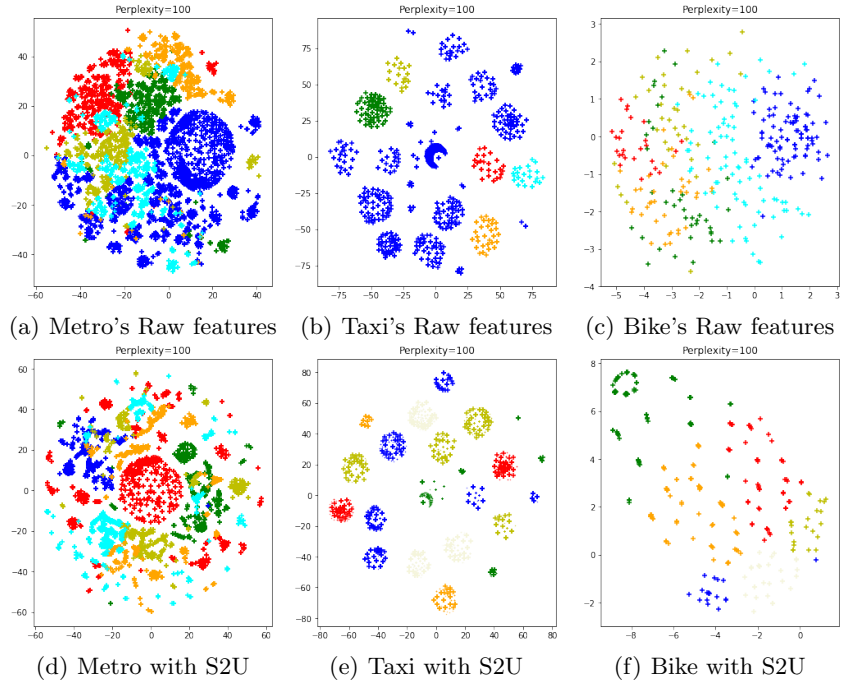

\centering
\vspace{-1\baselineskip}

\subfigure[Metro's Raw features]{\includegraphics[width=0.3\linewidth, trim={37cm 0cm 0cm 0cm}, clip]{images/metro_raw_tsne.png}\label{fig:metro_raw_tsne}}
\subfigure[Taxi's Raw features]{\includegraphics[width=0.3\linewidth, trim={37cm 0cm 0cm 0cm}, clip]{images/taxi_raw_tsne.png}\label{fig:taxi_raw_tsne}}
\subfigure[Bike's Raw features]{\includegraphics[width = 0.3\linewidth, trim={37cm 0cm 0cm 0cm}, clip]{images/bike_raw_tsne.png}\label{fig:bike_raw_tsne}}
\vspace{-1\baselineskip}

\subfigure[Metro with S2U]{\includegraphics[width=0.3\linewidth, trim={37cm 0cm 0cm 0cm}, clip]{images/metro_tsne.png}\label{fig:metro_tsne}}
\subfigure[Taxi with S2U]{\includegraphics[width=0.3\linewidth, trim={37cm 0cm 0cm 0cm}, clip]{images/taxi_tsne.png}\label{fig:taxi_tsne}}
\subfigure[Bike with S2U]{\includegraphics[width = 0.3\linewidth, trim={37cm 0cm 0cm 0cm}, clip]{images/bike_tsne.png}\label{fig:bike_tsne}}
\vspace{-1\baselineskip}

\caption{t-SNE visualization in which points are t-SNE transformed users using both raw and S2U-transformed users, and points' colors are based on Naive model and S2U model using 6 clusters.}
\label{fig:tsne}
\end{figure}

\textbf{Results of semantic-based aggregation of users:}
Two visualizations in Figure \ref{fig:user_clusters} demonstrate a qualitative performance of explainable clustering results based on S2U's raw weights and also semantic-based aggregated weights. In both heatmaps of sub-figures, each row of the heatmap represents a user, and each column is a weight for corresponding temporal signatures. The darker the blue is, the larger a weight value is, whose value can be found in the right-side color bar. Different colors in left-side color bars are cluster labels of different user groups. The $x$ axis is noted by the same index of its associated temporal signatures. The right sub-figure \ref{fig:user_group_aggregate} is noted by its combined temporal signatures' index, like $weight\_1+2$ means this weight is got by semantic-based aggregating temporal signatures $1$ and $2$. Both sub-figures have a strong similarity between a user's weights of temporal signatures and other users in the same clusters. Based on this explainable clustering results, we can interpret human daily life in Metro data. For example in sub-figure \ref{fig:user_group_normal}, the brown cluster (the second group from top) are users who have early working hour $7am$ and early back-home hour around $3pm$. the pink cluster (the third group from top) contains users who have later working hour $8am$ and later back-home hour $5pm$. We can use explainable clustering to support many such analysis of users.
\begin{figure}[htbp]
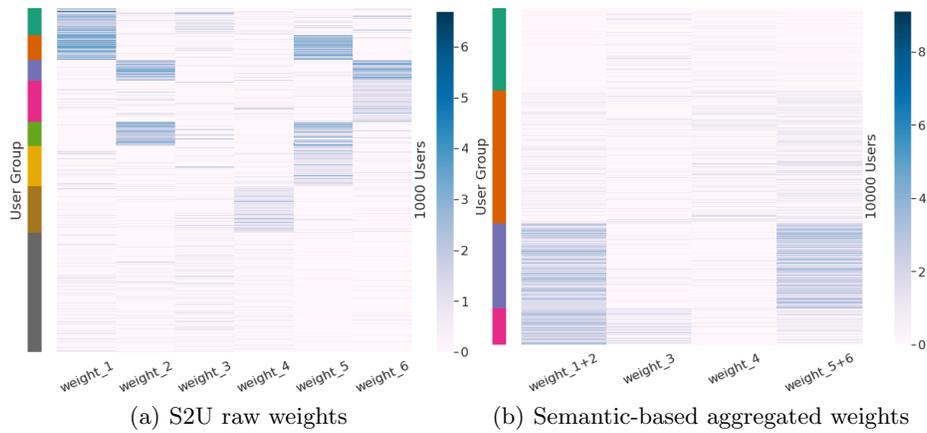

\centering
\vspace{-1\baselineskip}
\subfigure[S2U raw weights]{\includegraphics[width = 0.5\linewidth, trim={4.5cm 1cm 1cm 5cm}, clip]{images/user_group_normal.png}\label{fig:user_group_normal}}
\subfigure[Semantic-based aggregated weights]{\includegraphics[width = 0.49\linewidth, trim={4.5cm 1cm 1cm 5cm}, clip]{images/user_group_aggregate.png}\label{fig:user_group_aggregate}}
\vspace{-1\baselineskip}

\caption{Explainable User Clustering Labels (6 clusters for for flows before aggregation and 4 clusters for flows after aggregation)}
\label{fig:user_clusters}
\end{figure}

\textbf{Results of generated semantic-based station pattern:}
Using the tidal-regularized loss, the following visualization in Figure \ref{fig:generativity_attractivity} shows that our TR-NMF could support more explainable station patterns combined with semantic-based aggregation. Both sub-figures demonstrate station locations (circles) on the area around The White House (the background map). The bigger the size of a circle, and the lighter a blue color is, the stronger attractivity pattern (recovered commuting in-flow for associated temporal signatures) is found. The black solid line is the Metro transit lines. The left sub-figure is based on the $7am$ commuting signature. We can see that station around The White House (mostly Federal Government offices) have more commuting flow which indicate a early working hour. In right sub-figure b), the station at Dupont Circle (a concentrated commercial area) has a larger $8am$ commuting in-flow, and stations around The White House decrease a little but still very strong. This example clearly illustrates how our S2U with TR-NMF can support explainable station patterns generation and intra-city function analysis.
\begin{figure}
\centering
\vspace{-1\baselineskip}
    \centering
    \includegraphics[width=\linewidth, trim={0cm 0cm 3cm 3cm}, clip]{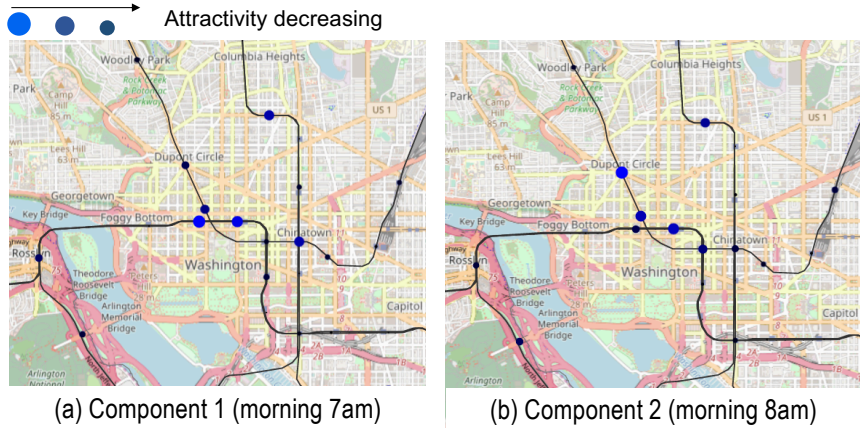}
    \vspace{-1\baselineskip}
    \caption{Explainable stations pattern around White House at Washington D.C.}
    \label{fig:generativity_attractivity}
\end{figure}



\section{Conclusion}
\label{sec:conclusion}
We propose a new Station-to-User (S2U) transfer learning framework to achieve a more explainable and stable learning of users in semantic-poor farecard data through transfering users to a latent feature space built with stations' temporal signatures. Also, we develop a novel Tidal-regularized Non-negative Matrix Factorization to guide the learning process towards a tidal-traffic commuting pattern that dominate urban transportation. To demonstrate the effectiveness of our work, we also set up a first-of-its-kind user stability test as a benchmarking evaluation metric to promote cross-model performance comparison. Lastly, we show that our framework improve $0.15$ for mean ARI and $0.21$ for median ARI in Taxi data experiment, and smaller margin of improvement for Metro and Bike data experiments. With visualization of t-SNE, we discuss the observations that reveal the power of S2U framework, and the difficulty of clustering tasks for Metro and Bike users. Finaly, we showcase how our explainable framework could support user behaviors analysis and station patterns analysis.

%
%
%
\bibliographystyle{splncs04}
\bibliography{main.bib}
%




\end{document}